\documentclass{article}

\usepackage{arxiv}
\usepackage[utf8]{inputenc}
\usepackage[T1]{fontenc}
\usepackage{amsmath,amssymb,amsthm}
\usepackage{booktabs}
\usepackage{graphicx}
\usepackage{microtype}
\usepackage{xcolor}
\usepackage[colorlinks=true,allcolors=blue!55!black]{hyperref}

\newtheorem{proposition}{Proposition}
\newtheorem{definition}{Definition}

\DeclareMathOperator{\ONL}{ONL}
\newcommand{\PAR}{\mathrm{PAR}}
\newcommand{\MUX}{\mathrm{MUX}}
\newcommand{\bx}{\mathbf{x}}
\newcommand{\R}{\mathbb{R}}

\title{Relocating Nonlinearity: How a Downstream Learner\\
Reshapes What Genetic Programming Must Evolve}

\author{
 Nam H. Le \\
  Vermont Complexity Center\\
  University of Vermont\\
  Burlington, VT 05405 \\
  \texttt{nam.le@uvm.edu}
}

\begin{document}
\maketitle

\begin{abstract}

Genetic programming was conceived as a way of evolving solutions: the program is
the answer, and fitness is the error of its own output. A substantial line of work
instead makes the program an input to a separate learner, so that fitness
measures the learner's output rather than the program's. Which learner to attach
matters and depends on the problem, with no account of what decides it. The
account has to concern the program: what makes a target hard for genetic
programming is how much nonlinearity the program must build by composing
primitives. Any nonlinearity the learner supplies, the program is spared from
building. The quantity to measure is therefore how much of a target's
nonlinearity a learner can take over, and on continuous benchmarks it can only
be estimated.

Here we show that how much the program must still build is decided by which
learner is attached, and is measurable on the programs themselves. Moving to
Boolean domains, where a target's nonlinearity is exactly its Fourier degree, we
tune that degree from one to six with everything else fixed. Conditioned on
success, a linear learner forces the evolved program to the target's degree
exactly, at one through six without a single exception over thirty runs per
setting, while tree ensembles succeed with far simpler programs and four times as
often.

This gives the field two things. A learner can be chosen from a target's structure
instead of its reputation for difficulty. Our control target has \emph{lower}
degree than the parity problems yet gains nothing from a nonlinear learner, while
targets reducible to a simple statistic gain a great deal. And the degree of the
evolved program is a diagnostic free to compute during any run, showing whether
the learner is contributing nothing or is carrying the model. The latter comes
with a warning: evolution internalises what the learner supplies in only four of
sixty-three conditions, and grows more dependent on it in forty-four. The more
capable the learner, the less of the model is left legible in the program.

\end{abstract}

\section{Introduction}

Genetic programming (GP) was introduced as a way of evolving computer programs to
solve problems~\cite{koza1994genetic}. The defining arrangement is that the
program is the answer: it is executed on the task, and its fitness is the error
of its own output. Two application areas have carried most of the work. In
symbolic regression the evolved program is the model. The arrangement runs from
early successes in recovering physical laws from data~\cite{schmidt2009distilling},
through a modern line of work combining expression search with neural guidance,
physics-inspired decomposition, or reinforcement
learning~\cite{udrescu2020aifeynman,petersen2021deep,cranmer2020discovering}, to
benchmarking against standard machine learning methods at
scale~\cite{lacava2021srbench}. In classification the evolved program produces
quantities from which a decision is made, in fault
diagnosis~\cite{guo2005feature}, on high-dimensional
data~\cite{tran2019gp}, and in multiclass problems~\cite{munoz2015m3gp}.

A substantial line of work departs from the defining arrangement. The evolved
program stops being the answer. It becomes an input to a separate, deterministic
learner that is refitted for every candidate, and fitness measures the learner's
output instead of the program's. In classification this is usually called feature
construction~\cite{krawiec2002gp,guo2005feature,tran2019gp,munoz2015m3gp,lacava2019m4gp};
in regression it appears as multi-gene GP, which fits a linear model over several
evolved trees~\cite{searson2010gptips}, multiple regression GP, which recombines a
program's subexpressions by least squares before
selection~\cite{arnaldo2014mrgp}, evolutionary feature
synthesis~\cite{arnaldo2015efs}, and systems that learn feature spaces or networks
of trees with the downstream model's weights as
feedback~\cite{lacava2019feat,lacava2020feature}. The difference from standard
genetic programming is not incidental. Once fitness stops measuring the program's
own output, the program stops being a solution and becomes one component of an
answer completed elsewhere, a \emph{representation}. The literature does not put
it this way, and so never asks the question that follows: what is the program now
responsible for?

The most recent instance of the arrangement is also its simplest, and makes the
question unavoidable. One evolved program supplies a single feature; its values on the data form one
semantic vector, and a univariate learner is fitted from that feature to the
target. Fitness is the error of the composition, and the evolutionary loop is
left entirely unchanged~\cite{le2026laser}, an arrangement we refer to below as
LaSER. Because only the learner varies, the effect of the
learner can be read directly. That work reports a linear learner dominating one
benchmark suite while nonlinear learners take over on a harder one, and reports
that the evolved program sometimes attains the target unaided, which it reads as a
Baldwin effect. Neither result is explained. The first says which learner won
where without naming any property of the targets that would transfer to a new
problem, and the second is measured only at the end of a run on the easiest
problems.

What the learner contributes is therefore still unknown. Benchmark comparison
cannot settle it, because such a comparison characterises a system, not a
problem. The account has to concern the program. What makes a target hard for
genetic programming is, to a large extent, how much nonlinearity the program must
build by composing primitives: a deeply nested product of the right variables is
hard to assemble in a way that a rescaling of it is not. Whatever nonlinearity the
attached learner can supply, the program does not have to build. The question of
which learner to attach is therefore the question of how much of a target's
nonlinearity a learner can take over, and the quantity that answers it is the
target's \emph{order of nonlinearity} in the sense of Vladislavleva et
al.~\cite{vladislavleva2008order}. The difficulty is that on the continuous
benchmarks where this work is done that quantity must be estimated, and the
estimate moves with the sampling range, the error tolerance and the polynomial
basis. A claim of the form ``a nonlinear learner is needed when the order of
nonlinearity is high'' cannot be tested cleanly while the quantity inside it is
partly an artefact of the experimenter's choices. The question has stayed open
because it is a measurement problem before it is a scientific one.

We treat it as such. On $\{-1,+1\}^n$ every function has a unique multilinear
expansion, and because $x_i^2=1$ the monomials of degree at most $d$ span
precisely the multilinear terms of degree at most $d$; order of nonlinearity
therefore coincides with Fourier degree and is computable in closed form by a
Walsh-Hadamard transform~\cite{odonnell2014analysis}. The parity family supplies a
dial that moves that degree from one to six while the input space, the data, the
primitive set and every search parameter stay fixed. The domain is an instrument
chosen to make a measurement possible, not the subject of the
claim. The measurement itself is the departure: instead of inferring the division
of labour from performance, we compute the order of nonlinearity of the program
each run actually produced, a quantity that to our knowledge has not been
reported for any system of this kind.

Here we show two things: which learner is attached decides how much the program
must still build, and that amount is directly visible in the programs evolution
produces. Conditioned on success, a linear learner pins the order of the
evolved program to the target's exactly, at one through six with no exception in
any of the thirty runs per condition, while tree ensembles succeed with programs
of order two or three; success rates separate accordingly, at six and seven runs
of thirty against twenty-two, nineteen and twenty-six at the highest order. The
advantage is real but bounded, and the bound is informative: a $6$-multiplexer
control, of \emph{lower} order than the parity problems where the separation is
largest, gains nothing at all from a nonlinear learner, so difficulty and the
appropriate learner are orthogonal. Neither component approaches the result alone: the same learners reach $0.112$
held-out accuracy on the raw inputs, and evolution without a learner collapses to
a constant program.

A second finding follows from reading the arrangement literally. The learner is
refitted every generation and its fitted parameters are never inherited, which is
the structure of lifetime learning in the Baldwinian
sense~\cite{Baldwin:1896,Hinton:1987}. Following the evolved program across
generations shows that evolution does not internalise what the learner supplies.
Across sixty-three problem and learner combinations, genetic assimilation (the
genotype coming to replicate, on its own, what learning used to supply) occurs
in four, three of which are canonical genetic programming, while in forty-four the
contribution of learning widens over evolutionary time; on parity of order four
the innate accuracy of the best individual falls from $0.713$ to $0.394$ while its
learned accuracy rises to $1.000$. The more capable the learner, the less of the
model lies in the program one can read, which places predictive accuracy in
conflict with the self-sufficiency of symbolic models that motivates the use of
genetic programming in the first place.

The contributions are as follows. We pose the division of labour between evolved
program and downstream learner as a question about a measurable property of the
target, and make that property exact by choice of domain. We prove that an affine
learner is order preserving: it supplies no order of nonlinearity, so the
evolved program must carry all of it. This places linear scaling and least
squares combination at the boundary of the design space, not at a weak point
inside it. We prove that parity of order $k$ is an exact composition of an
order-$1$ program with a bounded nonlinear learner, and that the same holds for
every symmetric Boolean function whatever its degree, so the class of targets
rewarding an expressive learner is characterisable, not merely anecdotal. We
measure the order of nonlinearity of the programs evolution produces and show that
it tracks what the learner can absorb, not what the target contains. We
show that a control target of lower order gains nothing from an expressive
learner, separating difficulty from decomposability. And we show that learner
capacity and genetic assimilation stand in opposition, which reverses the usual
reading of the Baldwin effect in this setting.

\section{Related Work}

Machine learning has entered genetic programming in two quite different places,
and the distinction matters for what follows. In the first, learning is put into
the \emph{search}: the evolved expression remains the answer, and a learned
component decides where to look for it. Deep symbolic regression trains a
generator of expressions by risk-seeking policy
gradients~\cite{petersen2021deep}; neural-guided seeding initialises a GP
population from a trained model~\cite{Mundhenk2022GP}; AI Feynman uses a neural
network to detect symmetry and separability and to recurse on
subproblems~\cite{udrescu2020aifeynman}; and graph networks trained on a system
are distilled into closed-form expressions after the
fact~\cite{cranmer2020discovering}. A large benchmarking effort now places these
alongside standard machine learning
methods~\cite{lacava2021srbench}. In all of them the arrangement Koza defined is
preserved, because fitness still measures what the expression itself computes. The
present paper is not about this family. The distinction matters because the phrase
``genetic programming combined with machine learning'' now most often refers to
it.

In the second place, learning is put into the \emph{model}: as in Section~1, the
evolved program gives up being the answer, becoming instead an input to a learner
that is refitted for every candidate, with fitness measuring the learner's
output. In classification this is the standard framing of feature construction,
from Krawiec's formulation of it as changing the representation of the input data
for a machine learner~\cite{krawiec2002gp}, through feature generation for fault
diagnosis~\cite{guo2005feature} and multiple-feature construction on
high-dimensional data~\cite{tran2019gp}, to multidimensional transformations
consumed by a distance based classifier~\cite{munoz2015m3gp,lacava2019m4gp}. In
regression the same shift appears as multi-gene
GP~\cite{searson2010gptips}, multiple regression
GP~\cite{arnaldo2014mrgp}, evolutionary feature
synthesis~\cite{arnaldo2015efs}, and systems that learn feature spaces or networks
of trees using the downstream model's weights as
feedback~\cite{lacava2019feat,lacava2020feature,lacava2019semantic}, the same
systems already sketched in Section~1. Icke and
Bongard combined GP based symbolic regression with deterministic learners with the
same motivation~\cite{icke2013improving}.

Two features of that second family are relevant here. The first is what it holds
fixed. Whatever the domain, the learner is refitted per candidate and never
inherited, and the evolved program is scored by proxy, so in every one of these
systems the program is a representation, not a solution, though none of them
says so. The second is what it varies, and here the variation is
narrower than it first appears: the overwhelming majority attach a \emph{linear}
downstream model, whether a least squares combination of
genes~\cite{searson2010gptips,arnaldo2014mrgp}, a linear map over evolved
features~\cite{arnaldo2015efs}, or a linear or distance based decision
rule~\cite{munoz2015m3gp,lacava2019m4gp}. By Proposition~\ref{prop:affine} below,
such a stage supplies no order of nonlinearity at all, so in all of this work the
evolved features must between them carry the target's full nonlinear structure.
This literature has not overlooked the effect we report; it has been
structurally unable to observe it, because it has almost never varied the one
component that would reveal it.

A third strand attacks the same difficulty from inside the evolutionary loop, not
after it. Linear scaling fits a least squares affine map to each
individual's output so that evolution is scored on shape rather than
scale~\cite{keijzer2003improving}, and is best read as the degenerate member of
the second family above: one evolved feature, one affine learner. Semantic methods
instead change the variation operators so that behaviour moves geometrically in
semantic space~\cite{moraglio2012geometric,vanneschi2014geometric}, at the cost of
rapid code growth~\cite{vanneschi2014survey}. These are not merely empirical
proposals; geometric semantic operators come with a conical fitness landscape
argument, and semantic backpropagation and library based methods have a developed
theory. Our results touch this strand at one point only: stating what an affine
post-hoc map cannot contribute locates linear scaling exactly, instead of
treating it as merely a weak learner.

The measure this paper builds on also comes from within genetic programming.
Vladislavleva et al.\ proposed the minimal degree of a Chebyshev approximation as
a complexity measure for evolved models, and used it as a selective pressure
inside a Pareto GP to keep solutions simple~\cite{vladislavleva2008order}. Their
measure applies to the model and acts as a constraint on it. We use the same
notion twice over in the opposite direction: as a property of the \emph{target},
predicting which downstream learner is required, and as a measurement made
\emph{on} the program a run produced, not a pressure imposed on it. Neither
use appears in that literature, and the second is what supplies the evidence in
Section~\ref{sec:reloc}.

The experimental instrument likewise has a long history in a different role.
Boolean function synthesis is among the oldest GP benchmarks and the even parity
family is its canonical hard case~\cite{koza1994genetic,poli2008field}. Its role
has been diagnostic: because parity admits no useful partial solutions, it reveals
whether a proposed mechanism supplies structural search or merely smooths a
numerical landscape, which is why it appears as the first hard case in the work
introducing geometric semantic GP~\cite{moraglio2012geometric}. In every such use
it is a single hard instance, chosen for being hard. The property exploited here
is different: parity is an indexed \emph{family} whose index is exactly the
quantity under study, which turns order of nonlinearity from a property one
estimates into an experimental variable one sets. That a Boolean function has a
well defined Fourier degree is standard~\cite{odonnell2014analysis}; what is new
is using it to measure evolved programs.

The second half of the paper connects to a separate tradition. That lifetime
learning can redirect evolution without Lamarckian inheritance is the Baldwin
effect~\cite{Baldwin:1896}, demonstrated computationally by Hinton and
Nowlan~\cite{Hinton:1987} and studied since in evolutionary
computation~\cite{whitley1994lamarckian,Le2019}. Mayley identified the two
opposing forces that decide its outcome: learning smooths the fitness landscape,
which helps evolution, but it also shields genotypes from selection, removing the
pressure to internalise what learning already supplies, so assimilation occurs
only when the cost of learning outweighs the
shielding~\cite{mayley1996landscapes}. Every system in the second family above is
an instance of this structure, with a learning episode in each individual's
lifetime whose results are discarded before reproduction, yet none is analysed in
these terms. The consequence is that a question with a long history in
evolutionary computation has never been asked of the systems that most cleanly
instantiate it.

Closest to the present work is a recent proposal of the single-feature form of the
arrangement, evaluated on continuous symbolic regression
benchmarks~\cite{le2026laser}. It is the only member of the second family we
know of in which the downstream learner is varied systematically while everything
else is held fixed, which is what makes its two reported results informative and
the absence of an account of them worth addressing. The first, that a linear learner dominates
one benchmark suite while nonlinear learners take over on a harder one, names no
property of the targets and therefore transfers to no new problem; we supply that
property and show that the two suites differ in it. The second, that the evolved
program sometimes attains the target unaided, is read there as evidence of a
Baldwin effect, but it is measured only at the end of a run and only on the
easiest problems; we measure the same phenomenon at every generation across the
full range of target order and find that it does not generalise, the prevailing
direction being the opposite one.

\section{Method}

\subsection{Problem formulation}

Throughout, the data are $m$ pairs of an input $\bx$ and a target $y$, and the
task is to predict $y$ from $\bx$. Figure~\ref{fig:pipeline} puts the two
arrangements side by side. In canonical genetic programming the evolved program is
executed on the inputs and its own output is compared with the target, so the program has to build the whole of the
target's structure by composing primitives. In the arrangement studied here the
program's output is instead handed to a learner $f$, which is fitted afresh for
every candidate, and it is the learner's output that is compared with the target.
As in Section~1, the program is now one piece of an answer finished elsewhere,
and how much of the structure it still has to build depends on what $f$ can
supply.

\begin{figure}[ht]\centering
\includegraphics[width=\linewidth]{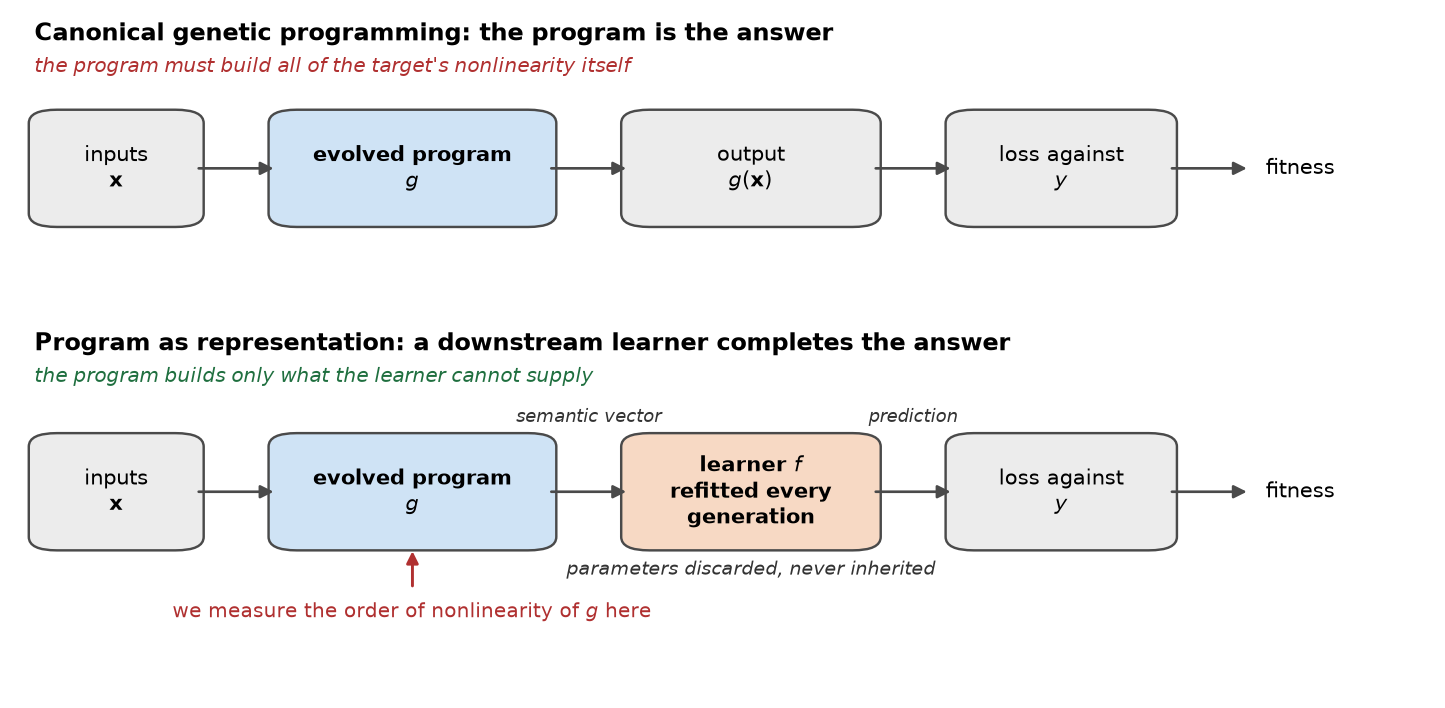}
\caption{The two arrangements, with $\bx$ an input, $y$ its target, $g$ the evolved
program and $f$ the downstream learner. Only the second row is studied here. The learner is
refitted for every candidate in every generation and its fitted parameters are
discarded, so nothing it learns is inherited. Everything in the evolutionary loop
is held fixed across conditions; only $f$ changes.}
\label{fig:pipeline}
\end{figure}

Formally, the data are the pairs $(\bx_1,y_1),\dots,(\bx_m,y_m)$, with each input
$\bx_i$ in an input space $\mathcal{X}$ and each target $y_i$ a real number. An
individual encodes a function $g:\mathcal{X}\to\R$, and its values on the inputs,
$(g(\bx_1),\dots,g(\bx_m))$, are its \emph{semantic vector}. Fitness is
\begin{equation}
\mathcal{L}(f\circ g)=\frac1m\sum_{i=1}^m\bigl(f(g(\bx_i))-y_i\bigr)^2,
\qquad f=\arg\min_{f'\in\mathcal{F}}\mathcal{L}(f'\circ g),
\label{eq:fit}
\end{equation}
where $\mathcal{F}$ is the class the learner is drawn from. Taking $\mathcal{F}$
to be the identity map recovers the first row of Figure~\ref{fig:pipeline};
restricting it to affine maps recovers linear
scaling~\cite{keijzer2003improving} and, in the single feature case, the least
squares combination used by multi-gene and multiple regression
GP~\cite{searson2010gptips,arnaldo2014mrgp}. This is the whole experimental
design: $\mathcal{F}$ is the only thing that differs between conditions, while
selection, variation, the primitive set, the size limit and the seed sequence stay
identical, so any difference we report is attributable to the learner and to
nothing else.

\subsection{Benchmark targets and order of nonlinearity}
\label{sec:domain}

The experiments need a domain in which the order of nonlinearity of a target is a
known quantity rather than an estimate, and in which it can be varied on its own
while everything else is held still. Boolean functions give both. We work on
$\{-1,+1\}^n$ with $n=8$, using the full truth table of $256$ rows split $70/30$
into training and held out data, so that no sampling decision enters anywhere.

Order of nonlinearity is the degree of the simplest polynomial that reproduces a
function to within a tolerance.

\begin{definition}[Order of nonlinearity]
\label{def:onl}
For $h:\mathcal{X}\to\R$ and tolerance $\varepsilon>0$, $\ONL_\varepsilon(h)$ is
the least total polynomial degree $d$ for which some polynomial of degree $d$
approximates $h$ with squared error at most $\varepsilon\cdot\mathrm{Var}(h)$.
\end{definition}

\begin{proposition}[Exactness on the hypercube]
\label{prop:fourier}
Let $h:\{-1,+1\}^n\to\R$ have multilinear expansion
$h=\sum_{S\subseteq[n]}\hat h(S)\chi_S$ with $\chi_S(\bx)=\prod_{i\in S}x_i$. Then
$\ONL_0(h)=\max\{|S|:\hat h(S)\neq0\}$, the Fourier degree of $h$.
\end{proposition}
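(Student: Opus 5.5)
The plan is to prove the two inequalities $\ONL_0(h)\le d^\ast$ and $\ONL_0(h)\ge d^\ast$ separately, where $d^\ast=\max\{|S|:\hat h(S)\neq0\}$. Throughout, I read Definition~\ref{def:onl} at $\varepsilon=0$ as exact agreement of the polynomial with $h$ on every point of $\{-1,+1\}^n$: squared error $0$ over the full cube. I take the constant function to have degree $0$, and the zero function likewise.

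For the upper bound, the multilinear expansion $\sum_S \hat h(S)\chi_S$ is itself a polynomial of total degree $d^\ast$. It agrees with $h$ everywhere on the cube, so it witnesses $\ONL_0(h)\le d^\ast$. This step only needs the existence of the expansion. That existence is standard: the $2^n$ characters $\chi_S$ are orthonormal under the uniform inner product, and $\R^{\{-1,+1\}^n}$ has dimension $2^n$, so the characters form a basis.

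For the lower bound, suppose a polynomial $p\in\R[x_1,\dots,x_n]$ of total degree $d$ agrees with $h$ on the cube. I reduce $p$ modulo the relations $x_i^2=1$: each monomial $\prod_i x_i^{a_i}$ equals $\chi_{\{i:\,a_i\text{ odd}\}}$ on the cube. Writing $S=\{i:a_i\text{ odd}\}$, we have $|S|\le\sum_i a_i\le d$, so the reduction is non-increasing in degree. Hence the multilinear reduction $\tilde p$ has degree at most $d$ and still agrees with $h$ on the cube. By uniqueness of the multilinear expansion, which follows from linear independence of the characters (orthogonality), $\tilde p=\sum_S\hat h(S)\chi_S$ coefficientwise. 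Some $S$ with $|S|=d^\ast$ has $\hat h(S)\neq0$, so $\tilde p$ has degree $d^\ast$, and therefore $d\ge\deg\tilde p=d^\ast$.

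The main obstacle is the lower bound. The subtle point is that a non-multilinear polynomial of low degree might seem able to represent a high-degree character. The reduction argument rules this out, because replacing $x_i^2$ by $1$ can only lower degree, and uniqueness then pins down the coefficients. I would also note that with $\varepsilon=0$ the variance normalisation is irrelevant: squared error at most $0\cdot\mathrm{Var}(h)$ means exact equality even when $\mathrm{Var}(h)=0$. In that case $h$ is constant, and both sides equal $0$.
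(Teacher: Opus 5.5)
Your proof is correct and follows the paper's argument. The paper also uses $x_i^2=1$ to reduce each monomial to a character $\chi_S$ of no larger degree, concludes that degree-$d$ polynomials span exactly $\{\chi_S:|S|\le d\}$ on the cube, and then invokes uniqueness of the multilinear expansion; you split this into the two inequalities and justify existence and uniqueness via orthonormality of the characters, which the paper takes as standard.
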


\begin{proof}
Since $x_i^2=1$ on the cube, every monomial of degree at most $d$ reduces to a
multilinear monomial of degree at most $d$, so the span of the degree-$d$
polynomials equals the span of $\{\chi_S:|S|\le d\}$. The multilinear expansion is
unique, so $h$ lies in that span if and only if $\hat h(S)=0$ for all $|S|>d$.
\end{proof}

Figure~\ref{fig:domain}(a) is Proposition~\ref{prop:fourier} applied to the targets
we use. Each row is a target and each column a degree, and the cell holds the share
of that target's spectrum sitting at that degree. The numbers come from a
Walsh-Hadamard transform with $\varepsilon=0$ and no basis to choose, so the order
printed beside each row is a fact about the function rather than a reading taken
at some tolerance.

\begin{figure}[ht]\centering
\includegraphics[width=\linewidth]{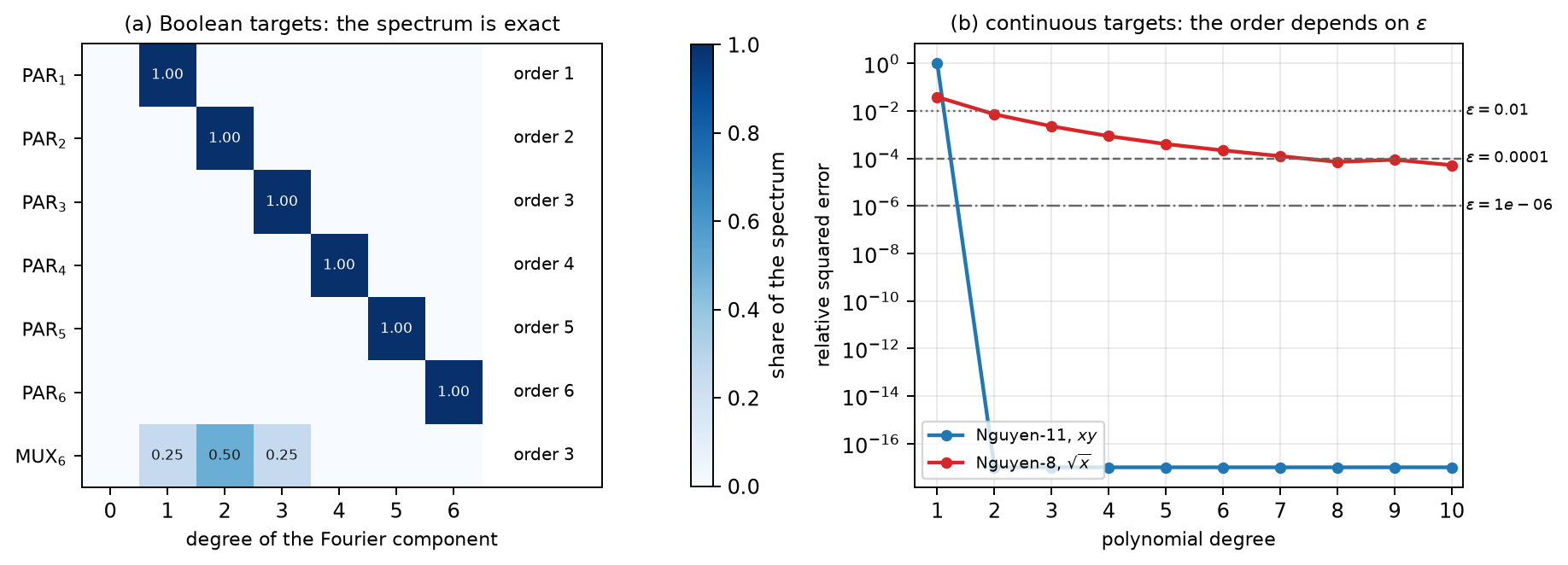}
\caption{Order of nonlinearity of the benchmark targets. (a) The Fourier spectrum of
each Boolean target. Parity puts its entire spectrum at one degree, which is what
makes $k$ a dial; the multiplexer spreads its spectrum over degrees one to three.
(b) The same quantity for the two continuous targets, read off the error curve of
the best polynomial fit. Nguyen-11 is exact at degree two, but for Nguyen-8 the
order is two at $\varepsilon=10^{-2}$, eight at $\varepsilon=10^{-4}$ and
undefined at $10^{-6}$, so on a continuous domain the answer is partly a choice.}
\label{fig:domain}
\end{figure}

Two features of panel (a) set up the experiments. Parity,
$\PAR_k(\bx)=\prod_{i=1}^k x_i$, puts its whole spectrum at degree $k$ and nowhere
else, so sweeping $k$ from one to six moves the target's order through six values
while the input space, the data, the primitive set and every search parameter stay
fixed. We take parity over the first $k$ of the eight inputs, so a run must also work out which inputs matter. The
$6$-multiplexer, in which two address inputs select one of four data inputs, is
the control. Its order is $3$, \emph{lower} than that of $\PAR_4$ through
$\PAR_6$, and its spectrum is spread rather than concentrated; as
Section~\ref{sec:relocate} explains, it is also not symmetric, which is what makes
it the case where a capable learner should buy nothing.

Panel (b) shows what is lost outside this domain, and why the standard benchmarks
cannot serve as the instrument. There the order has to be read off an error curve,
and for a target such as $\sqrt{x}$ the curve descends slowly, so the answer is
two, eight or undefined depending on where the tolerance is drawn. We still use
two continuous targets, Nguyen-11 ($xy$) and Nguyen-8 ($\sqrt{x}$), chosen at the
extremes of that reading with $\varepsilon=10^{-4}$, sampled at $1000$ points and
split in the same proportion, but only to check in Section~\ref{sec:bounds} that
the ordering found on the cube survives outside it.

\subsection{Order preservation under an affine learner}

A linear learner is, in a specific and useful sense, no help at all; the
proposition below makes that precise.

\begin{proposition}[Affine learners are order preserving]
\label{prop:affine}
Let $f(z)=\alpha z+\beta$ with $\alpha\neq0$. Then
$\ONL_\varepsilon(f\circ g)=\ONL_\varepsilon(g)$ for every $g$. Consequently a
pipeline with an affine learner attains relative error $\varepsilon$ on a target
$y$ only if the evolved program satisfies
$\ONL_\varepsilon(g)\ge\ONL_\varepsilon(y)$.
\end{proposition}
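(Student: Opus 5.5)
The plan is to show that the set of polynomials of degree at most $d$ is closed under nonconstant affine maps, and that the tolerance in Definition~\ref{def:onl} rescales in exactly the same way as the error. Fix $d$ and let $\mathcal{P}_d$ be the polynomials of total degree at most $d$ on $\mathcal{X}$. The map $p\mapsto \alpha p+\beta$ is a bijection of $\mathcal{P}_d$ onto itself, because constants lie in $\mathcal{P}_0\subseteq\mathcal{P}_d$ and $\alpha\neq0$ makes it invertible, with inverse $q\mapsto(q-\beta)/\alpha$. The first step is therefore to record that $p$ approximates $g$ if and only if $\alpha p+\beta$ approximates $f\circ g$.

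The second step is the error computation. For any $p$ we have
\[
\frac1m\sum_i\bigl(\alpha p(\bx_i)+\beta-\alpha g(\bx_i)-\beta\bigr)^2=\alpha^2\cdot\frac1m\sum_i\bigl(p(\bx_i)-g(\bx_i)\bigr)^2 .
\]
The variance also scales, as $\mathrm{Var}(f\circ g)=\alpha^2\mathrm{Var}(g)$. Hence the condition ``error $\le\varepsilon\,\mathrm{Var}$'' holds for the pair $(p,g)$ exactly when it holds for $(\alpha p+\beta,f\circ g)$. This works because the relative criterion in the definition is invariant under affine rescaling, and that invariance is the only place where the form of the tolerance matters. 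Taking the least admissible $d$ on both sides then gives $\ONL_\varepsilon(f\circ g)=\ONL_\varepsilon(g)$.

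The third step is the consequence. Suppose the pipeline attains relative error $\varepsilon$ on $y$, meaning $\mathcal{L}(f\circ g)\le\varepsilon\,\mathrm{Var}(y)$. To compare orders, note that any polynomial of degree $d_0=\ONL_\varepsilon(g)$ that fits $g$ can be pushed through $f$ as above. The difficulty is that the resulting polynomial then fits $f\circ g$, which is itself within $\varepsilon$ of $y$, so chaining the two gives error up to roughly $2\varepsilon$ by the triangle inequality, not $\varepsilon$. This is the main obstacle.

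I would handle it in two ways. First, I would settle the exact case $\varepsilon=0$ (relevant on the hypercube by Proposition~\ref{prop:fourier}). There $f\circ g=y$ on the data, so $\ONL_0(y)=\ONL_0(f\circ g)=\ONL_0(g)$ and the inequality holds with equality. Second, for the tolerant case I would read ``attains relative error $\varepsilon$'' as requiring $f\circ g$ itself to serve as the degree-$\ONL(g)$ witness for $y$. Since $f\circ g$ has the same polynomial representation class as $g$, if $g$ is exactly a polynomial of degree $d$ (as any program over polynomial primitives on the cube is), then $f\circ g$ is a degree-$d$ polynomial within $\varepsilon\,\mathrm{Var}(y)$ of $y$. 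That gives $\ONL_\varepsilon(y)\le d$. I would state this convention explicitly in the proof rather than hide the factor-of-two issue.
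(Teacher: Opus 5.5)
Your first two steps are the paper's own argument for the equality $\ONL_\varepsilon(f\circ g)=\ONL_\varepsilon(g)$. The map $p\mapsto\alpha p+\beta$ carries degree-$d$ approximants of $g$ to degree-$d$ approximants of $\alpha g+\beta$, both the error and the variance scale by $\alpha^2$, and $\alpha^{-1}$ gives the converse. For the consequence, the paper just writes $\ONL_\varepsilon(y)\le\ONL_\varepsilon(f\circ g)$ from the hypothesis that $f\circ g$ lies within $\varepsilon\,\mathrm{Var}(y)$ of $y$. That is exactly the step you flag, and you are right to flag it.

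In fact, for $\varepsilon>0$ the inequality is false, not merely unproved, so no convention about the word ``attains'' can rescue it. Work on the uniform cube with $\varepsilon=0.03$, $y=x_1+\tfrac{1}{5}x_1x_2$, $g=x_1+\tfrac{1}{10}x_1x_2$ and $f$ the identity:
\begin{itemize}
\item The best degree-$1$ fit of $y$ leaves relative error $0.04/1.04>0.03$, so $\ONL_\varepsilon(y)=2$.
\item The best degree-$1$ fit of $g$ leaves $0.01/1.01\le0.03$, so $\ONL_\varepsilon(g)=1$.
\item Yet $\|g-y\|^2/\mathrm{Var}(y)=0.01/1.04\le0.03$, so the pipeline attains relative error $\varepsilon$.
\end{itemize}
A minor correction: chaining the two approximations loses a factor of about $4$ in squared error, not $2$.

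Your $\varepsilon=0$ argument is correct. It is the case the paper leans on when it says the constraint binds exactly on the cube.

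Your tolerant-case repair, however, changes the conclusion rather than the reading of the hypothesis. The $d$ you use is the exact degree of $g$, so what you prove is $\ONL_\varepsilon(y)\le\ONL_0(g)$. This is weaker than the stated bound, since $\ONL_\varepsilon(g)\le\ONL_0(g)$. It is also available only where $g$ is an exact polynomial, which holds on the cube but not for the continuous primitive set with $\sin$, $\cos$ and $\div$. You should present it as a modified proposition, not as a convention.

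If you want to keep $\ONL_\varepsilon(g)$ on the right-hand side, inflate the tolerance on the target instead:
\begin{itemize}
\item Let $q$ be the degree-$\ONL_\varepsilon(g)$ witness for $f\circ g$ from your second step.
\item Assume $\mathrm{Var}(f\circ g)\le\mathrm{Var}(y)$, which holds for the least squares affine fit.
\item Then $\|q-y\|\le\|q-f\circ g\|+\|f\circ g-y\|$ gives $\|q-y\|^2\le4\varepsilon\,\mathrm{Var}(y)$, hence $\ONL_{4\varepsilon}(y)\le\ONL_\varepsilon(g)$.
\item For an arbitrary affine $f$, the factor $4$ becomes $(2+\sqrt{\varepsilon})^2$.
\end{itemize}
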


\begin{proof}
If a polynomial $p$ of degree $d$ approximates $g$ to relative error $\varepsilon$
then $\alpha p+\beta$ has the same degree and approximates $\alpha g+\beta$ to the
same relative error; the converse follows with $\alpha^{-1}$. Hence the two
minimal degrees coincide. If in addition $f\circ g$ approximates $y$ to relative
error $\varepsilon$, then
$\ONL_\varepsilon(y)\le\ONL_\varepsilon(f\circ g)=\ONL_\varepsilon(g)$.
\end{proof}

Figure~\ref{fig:affine} illustrates the proposition, in the same way that a plot of
a non-separable problem illustrates the limits of a perceptron. Plot the
target against the value of the evolved program: an affine learner is a straight
line in that plane and nothing else, while a learner of capacity $V$ is any
function of the program with at most $V$ monotone pieces. Take $\PAR_4$ and two
programs a run might produce. On the left the program is the degree-$1$ sum of the
four relevant inputs, which sorts every input into one of five groups with the
target alternating between $+1$ and $-1$ across them. The best affine learner is
the flat line and explains none of the variance; a learner with four monotone
pieces recovers the target exactly. On the right the program is the degree-$4$
product. Now the affine learner is exact, but only because the program has already
built the whole of the target by itself. The target and the two learner classes are the
same in both panels, so the difference lies entirely in what the program supplies.

\begin{figure}[ht]\centering
\includegraphics[width=\linewidth]{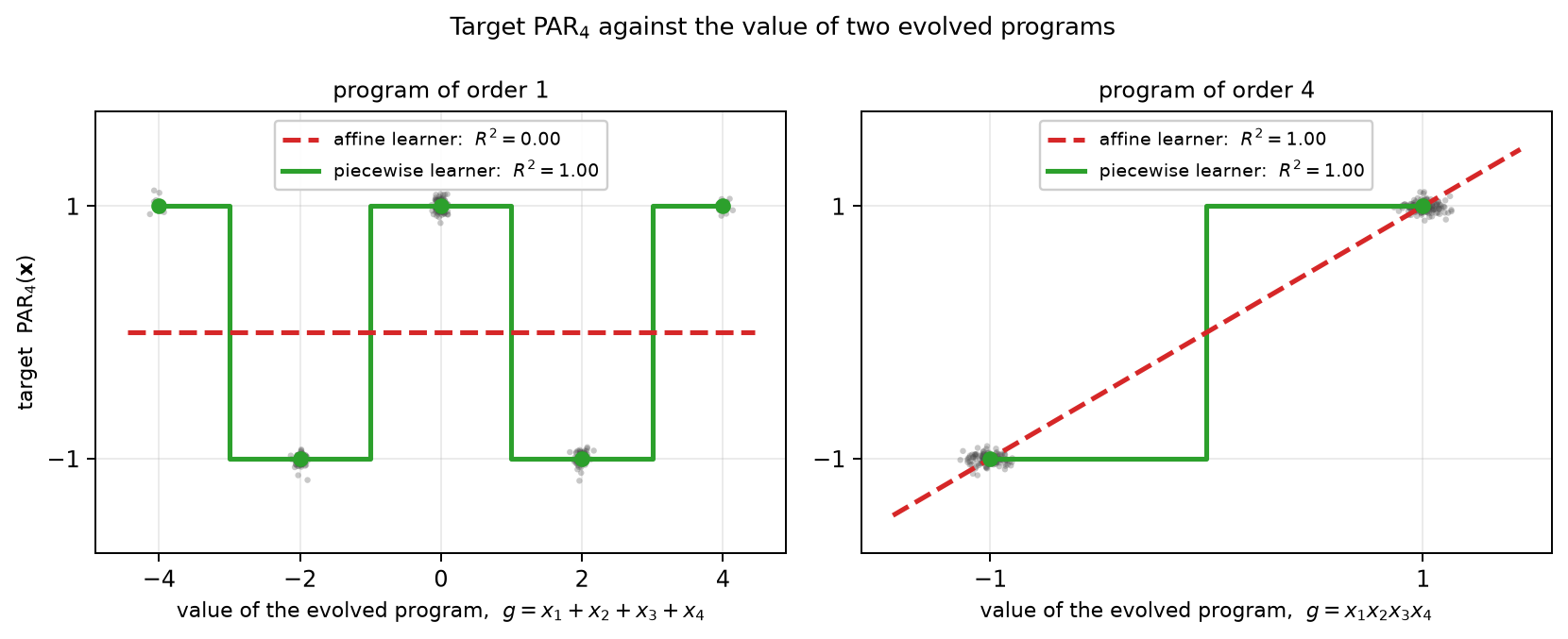}
\caption{Illustration of Proposition~\ref{prop:affine}. The target $\PAR_4$ is
plotted against the value of the evolved program; points are jittered so that
coincident inputs are visible. An affine learner can only draw a straight line
here. Left: a program of order $1$, on which the affine learner is useless and a
four-piece learner is exact. Right: a program of order $4$, on which the affine
learner is exact because the program has already done the work. The affine learner
never changes the order of what reaches it; it only decides whether the order the
program supplies is enough.}
\label{fig:affine}
\end{figure}

Rescaling a function does not make it more nonlinear, so a linear learner leaves
the program exactly where it started: whatever nonlinearity the target has, the
program must build all of it. This is why linear scaling and every least squares
combination of evolved features sit at the boundary of the design space, not
somewhere inside it, and why, as Section~2 notes, most of the feature
construction literature could not have observed the effect reported here.

\subsection{Relocatable order of nonlinearity}
\label{sec:relocate}

For a nonlinear learner the corresponding question is how far below the target's
order the program is allowed to sit. The answer cannot be stated without bounding
the learner, because an unconstrained $f$ makes the question trivial: a generic
real valued $g$ is injective on a finite sample, so an unconstrained $f$ could
memorise the target from any $g$ whatsoever. We therefore measure the capacity of
a univariate learner by its number of monotone pieces, and write $\mathcal{F}_V$
for the maps that are piecewise monotone with at most $V$ pieces. Every learner we
use has a finite such $V$ on a finite sample; memorisation would need $V$ of order
$2^n$.

\begin{definition}[Relocatable order]
\label{def:delta}
\[
\ONL^{\mathrm{dec}}_{\varepsilon,V}(y)=\min\bigl\{\ONL_\varepsilon(g):
\exists f\in\mathcal{F}_V,\ \|f\circ g-y\|^2\le\varepsilon\,\mathrm{Var}(y)\bigr\},
\qquad
\Delta_{\varepsilon,V}(y)=\ONL_\varepsilon(y)-\ONL^{\mathrm{dec}}_{\varepsilon,V}(y).
\]
\end{definition}

$\ONL^{\mathrm{dec}}$ is the order the program has to reach, given a learner of
capacity $V$. We call $\Delta$ the target's \emph{relocatable order}: how much of
the target's order the learner can take over.
Proposition~\ref{prop:affine} says $\mathcal{F}_1$ gives $\Delta=0$ by
construction. The claim tested in Section~4 is that $\Delta$, and not
$\ONL(y)$, decides whether an expressive learner is worth attaching.

Parity is the case in which $\Delta$ is as large as it can be.

\begin{proposition}[Parity decomposes at order one]
\label{prop:parity}
Let $\PAR_k(\bx)=\prod_{i=1}^k x_i$ on $\{-1,+1\}^n$. Then $\ONL_0(\PAR_k)=k$, yet
there exist $g$ with $\ONL_0(g)=1$ and $f\in\mathcal{F}_k$ with
$f(g(\bx))=\PAR_k(\bx)$ exactly, so $\Delta_{0,k}(\PAR_k)=k-1$.
\end{proposition}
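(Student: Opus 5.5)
The plan has three parts. First compute $\ONL_0(\PAR_k)$, then construct the order-one program and its learner, and finally read off $\Delta$. For the first part I would apply Proposition~\ref{prop:fourier} directly. $\PAR_k=\chi_{[k]}$ is itself a single character, so its multilinear expansion has $\hat h([k])=1$ and all other coefficients zero. Its Fourier degree, and hence $\ONL_0(\PAR_k)$, is therefore $k$.

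For the construction I would take $g(\bx)=\sum_{i=1}^k x_i$. Its expansion consists of the $k$ degree-one characters, so by Proposition~\ref{prop:fourier} $\ONL_0(g)=1$; it is not constant, so its order is not $0$. Write $j$ for the number of coordinates among $x_1,\dots,x_k$ equal to $-1$. Then $g=k-2j$ and $\PAR_k=(-1)^j$. The value of $g$ therefore determines $\PAR_k$, and we may define $f$ on the $k+1$ points $\{k,k-2,\dots,-k\}$ by $f(k-2j)=(-1)^j$. I would extend $f$ to all of $\R$ by linear interpolation between consecutive points and constantly outside $[-k,k]$. The resulting map is monotone on each of the $k$ intervals $[k-2j-2,\,k-2j]$, since it alternates between $+1$ and $-1$ across them. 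The constant tails can be absorbed into the end pieces. So $f\in\mathcal{F}_k$ and $f\circ g=\PAR_k$ exactly on the cube, which is the left panel of Figure~\ref{fig:affine} for $k=4$.

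It remains to conclude $\Delta_{0,k}(\PAR_k)=k-1$. By Definition~\ref{def:delta} this needs $\ONL^{\mathrm{dec}}_{0,k}(\PAR_k)=1$. The construction gives the upper bound $\le1$. For the lower bound, suppose some $g$ had $\ONL_0(g)=0$. Then $g$ is constant, so $f\circ g$ is constant, and a constant cannot equal $\PAR_k$ with zero error because $\mathrm{Var}(\PAR_k)=1>0$ (for $k\ge1$). So the minimum is exactly $1$, and $\Delta=k-(1)=k-1$.

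I expect the main obstacle to be the counting of monotone pieces. The boundary convention has to be settled: whether the constant tails and the shared endpoints count as extra pieces. The definition of $\mathcal{F}_V$ is informal, so I would fix it as a partition of $\R$ into at most $V$ intervals on each of which $f$ is monotone. With that reading, merging each tail into its adjacent interval gives exactly $k$ pieces, and the rest is routine.
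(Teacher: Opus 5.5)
Your proposal is correct and follows the paper's proof. Both use the same program $g=\sum_{i\le k}x_i$ and the same count $g=k-2j$ giving $\PAR_k=(-1)^j$; your piecewise-linear interpolation with constant tails plays the role of the paper's $f(s)=\cos(\pi(k-s)/2)$, which the paper only says is ``realisable'' with $k$ pieces on the $k+1$ attainable values. You also make explicit the lower bound $\ONL^{\mathrm{dec}}_{0,k}(\PAR_k)\ge 1$ (a constant $g$ forces a constant $f\circ g$), which the paper leaves implicit when it concludes $\Delta_{0,k}(\PAR_k)=k-1$.
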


\begin{proof}
The first claim follows from Proposition~\ref{prop:fourier}, the Fourier expansion
of $\PAR_k$ being supported on the single set $[k]$. For the second, take
$g(\bx)=s(\bx)=\sum_{i=1}^k x_i$, of degree $1$. With
$m=|\{i\le k:x_i=-1\}|$ we have $s=k-2m$, hence $m=(k-s)/2$ and
$\PAR_k(\bx)=(-1)^m=\cos\bigl(\pi(k-s)/2\bigr)$. As $s$ takes only the $k+1$
values $\{-k,-k+2,\dots,k\}$, the map $f(s)=\cos(\pi(k-s)/2)$ is realisable as a
piecewise monotone function with at most $k$ pieces.
\end{proof}

Concretely, for $\PAR_4$ the program need only add up four of the eight inputs, a
degree-$1$ expression, and the learner reads parity off the five possible sums
$-4,-2,0,2,4$ by alternating between $+1$ and $-1$. The same target read through
an affine learner requires the program to build the full four-way product. The two
routes to the identical target differ in the order the program must reach by three,
and the primitive set we use, $\{+,-,\times\}$, expresses both, so the experiment
does not push evolution towards either.

Parity might look like a special case constructed for the purpose. It is not.

\begin{proposition}[All symmetric functions decompose at order one]
\label{prop:symmetric}
Let $h:\{-1,+1\}^n\to\R$ be invariant under permutation of its arguments. Then
there are $g$ with $\ONL_0(g)=1$ and $f\in\mathcal{F}_n$ with $f(g(\bx))=h(\bx)$
exactly, whatever the Fourier degree of $h$.
\end{proposition}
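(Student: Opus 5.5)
The plan follows the proof of Proposition~\ref{prop:parity}, with the sum of all $n$ inputs in place of the sum of the first $k$. Take $g(\bx)=s(\bx)=\sum_{i=1}^n x_i$. A symmetric $h$ depends on $\bx$ only through the number $m$ of coordinates equal to $-1$. Since $s=n-2m$, the value of $h$ is determined by $s$. I would write this as $h(\bx)=\phi(s(\bx))$, where $\phi$ is defined on the $n+1$ values $S_n=\{-n,-n+2,\dots,n\}$ by $\phi(n-2m)=h(\bx^{(m)})$. Here $\bx^{(m)}$ is any input with exactly $m$ coordinates equal to $-1$. Permutation invariance makes this choice irrelevant, because any two such inputs differ by a permutation.

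Next I would check that $\ONL_0(g)=1$. The multilinear expansion of $s$ is $\sum_i\chi_{\{i\}}$, so $\hat s(S)\neq0$ exactly when $|S|=1$. Proposition~\ref{prop:fourier} then gives Fourier degree $1$. This needs $n\ge1$; for $n=0$ the statement is vacuous, and I would note that convention.

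The remaining step is to extend $\phi$ from $S_n$ to a map $f$ on $\R$ that lies in $\mathcal{F}_n$. I would take $f$ to be the piecewise linear interpolant of the $n+1$ points $(t,\phi(t))$, $t\in S_n$, and make it constant outside $[-n,n]$. Each of the $n$ intervals between consecutive points of $S_n$ carries a single affine segment, which is monotone. The two constant tails can be absorbed into the first and last segments without losing monotonicity. So $f$ is piecewise monotone with at most $n$ pieces, and $f(g(\bx))=\phi(s(\bx))=h(\bx)$ for every $\bx$.

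Once $\phi$ is well defined, this piece count is the only point needing care, and it is elementary. Neither the construction nor the count uses the Fourier degree of $h$, so the conclusion holds whatever that degree is, exactly as the statement claims.
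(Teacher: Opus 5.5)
Your proof is correct and is the paper's argument: take $g=s=\sum_i x_i$, use permutation invariance to make $h$ a well-defined function of $s$ on its $n+1$ attainable values, and extend that function to a map with at most $n$ monotone pieces. You also spell out two steps the paper leaves implicit, namely the Fourier-degree check that $\ONL_0(g)=1$ and the piecewise linear interpolation with constant tails that achieves the piece count.
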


\begin{proof}
A permutation invariant function depends on $\bx$ only through the multiset of its
coordinates, hence on the cube only through $s(\bx)=\sum_i x_i$, of degree $1$.
Define $f$ on the $n+1$ attainable values of $s$ by $f(s)=h(\bx)$ for any $\bx$
with that sum; this is well defined by symmetry and piecewise monotone with at
most $n$ pieces.
\end{proof}

Majority, threshold and counting functions are all covered, many of them of full
Fourier degree $n$, so the class of targets on which a capable learner should pay
off is a describable class rather than a list of examples.

Four expectations follow, and Section~4 tests them in turn. With a linear learner,
a successful run must evolve a program whose order equals the target's. With a
sufficiently expressive learner, successful runs may use far lower order, and
should, because lower order programs are easier to find. The advantage of an
expressive learner should scale with $\Delta$ rather than with $\ONL(y)$, and
should vanish where $\Delta$ is near zero even if $\ONL(y)$ is large. And wherever
the learner absorbs part of the target's order it removes the selection pressure
on the program to supply it, so assimilation should be rarest exactly where the
learner does most.

\subsection{Experimental setup}

\paragraph{What we record.} Besides held out error and accuracy of the pipeline,
we record for the best individual of every generation of every run two further
quantities. One is the order of nonlinearity of the evolved program $g$, computed
exactly by Walsh-Hadamard transform on the Boolean targets and, on the continuous
ones, as the least polynomial degree reaching relative squared error $10^{-4}$ on a
held out probe sample. The other is the behaviour of $g$ \emph{before} the learner
is applied, which we call its \emph{innate} behaviour and which
Section~\ref{sec:overgen} needs. These two are the instruments of the paper: they
look at the programs rather than at how well the pipeline predicts.

\paragraph{Learners.} Linear regression, ridge, decision tree, random forest,
gradient boosting, and a multi-layer perceptron with one hidden layer of $50$ units
and early stopping, spanning a range of capacity $V$. Canonical GP, the first row
of Figure~\ref{fig:pipeline}, is the seventh condition.

\paragraph{Controls.} A limit of $30$ nodes applies identically to every
condition, so none has a larger budget than another. The Boolean primitive set is
$\{+,-,\times\}$ with the constant $1$, neutral between the two routes described
after Proposition~\ref{prop:parity}; the continuous set is
$\{+,-,\times,\div,\sin,\cos\}$ with an ephemeral constant. The learner in
Equation~\ref{eq:fit} is fitted under $3$-fold cross validation and fitness is the
mean held out fold error, for every learner including the linear ones. This matters: fitted and scored on the same rows, a tree or a forest
interpolates any one dimensional feature, training error falls to near zero for
almost every individual, and selection is left with nothing to act on.

\paragraph{Parameters and statistics.} Population $100$, $100$ generations,
tournament selection of size $3$, one point crossover $0.8$, uniform subtree
mutation $0.05$, elitism of one. Thirty independent runs per condition, $1890$
runs in total, none failed. Outcomes on the Boolean targets are bimodal: a run
either classifies every held out row correctly or sits near chance, with
intermediate accuracies rare. We therefore treat success as a binary outcome and
test it with Fisher's exact test on the number of exact runs out of thirty, which
suits this outcome and is not weakened by the ties that blunt a rank test here.
Wilcoxon signed-rank tests on paired accuracies are reported where they add
information. Orders of nonlinearity are medians.

\section{Results}

\subsection{Contribution of each component in isolation}

We first separate the mechanism from the search. Every learner is handed the same
order-$1$ representation $s(\bx)=\sum_{i\le k}x_i$ guaranteed by
Proposition~\ref{prop:parity}, and only $f$ is fitted (Table~\ref{tab:probe}).
Given the same representation, a nonlinear learner is exact up to order seven and
a linear one recovers nothing above order one. The difference between learners is
therefore not a difference in raw power applied to a fixed problem; it is whether
the learner can absorb the order the representation does not carry. On $\MUX_6$,
which admits no such representation, every learner fails together.

\begin{table}[ht]\centering
\caption{Held out $R^2$ when the order-$1$ representation is supplied and only the
downstream learner is fitted. The linear column is at chance for every order above
one, which is Proposition~\ref{prop:affine} in numerical form.}
\label{tab:probe}
\begin{tabular}{lrrrrr}
\toprule
\textbf{Target} & \textbf{Order} & \textbf{Linear} & \textbf{Tree} & \textbf{Forest} & \textbf{Boosting} \\
\midrule
$\PAR_2$ & 2 & $-0.061$ & 1.000 & 1.000 & 1.000 \\
$\PAR_4$ & 4 & $-0.015$ & 1.000 & 1.000 & 1.000 \\
$\PAR_6$ & 6 & $-0.004$ & 1.000 & 1.000 & 1.000 \\
$\PAR_8$ & 8 & $-0.057$ & 0.948 & 0.948 & 0.948 \\
\midrule
$\MUX_6$ & 3 & 0.233 & 0.211 & 0.213 & 0.211 \\
\bottomrule
\end{tabular}
\end{table}

\begin{table}[ht]\centering
\caption{Held out accuracy of learners applied directly to the raw inputs, with no
evolved representation, mean over ten splits. Chance is $0.5$.}
\label{tab:direct}
\begin{tabular}{lrrrrrr}
\toprule
\textbf{Target} & \textbf{Order} & \textbf{Linear} & \textbf{Tree} & \textbf{Forest} & \textbf{Boosting} & \textbf{MLP} \\
\midrule
$\PAR_2$ & 2 & 0.377 & 0.849 & 1.000 & 0.994 & 1.000 \\
$\PAR_3$ & 3 & 0.378 & 0.840 & 0.996 & 0.891 & 1.000 \\
$\PAR_4$ & 4 & 0.383 & 0.630 & 0.712 & 0.227 & 0.995 \\
$\PAR_5$ & 5 & 0.388 & 0.469 & 0.273 & 0.238 & 0.819 \\
$\PAR_6$ & 6 & 0.390 & 0.369 & \textbf{0.112} & 0.221 & 0.716 \\
\bottomrule
\end{tabular}
\end{table}

Table~\ref{tab:direct} applies the same learners directly to the raw inputs.
Learning alone degrades with order, the random forest reaching $0.112$ at order
six, below chance, because its axis aligned splits are misled by a target
uncorrelated with every single input. Evolution alone fails from the opposite
direction: without a learner it is at chance from order four
(Table~\ref{tab:success}) and its median evolved representation has order $0$,
meaning it has collapsed to a constant (Table~\ref{tab:reloc}). The rest of this
section therefore concerns how the two divide the work, not whether combining them
helps.

\subsection{Success rate and order of the evolved programs}
\label{sec:reloc}

\begin{table}[ht]\centering
\caption{Number of runs out of thirty classifying every held out row correctly.
$p$ values are Fisher's exact test against LaSER-LR, the GP column included.
Dashes mark comparisons where both conditions are exact on all runs.}
\label{tab:success}
\begin{tabular}{lrrrrrrrrrrr}
\toprule
& & \multicolumn{2}{c}{\textbf{GP}} & \textbf{LR} & \textbf{Ridge}
& \multicolumn{2}{c}{\textbf{Tree}} & \multicolumn{2}{c}{\textbf{Forest}}
& \multicolumn{2}{c}{\textbf{Boosting}} \\
\cmidrule(lr){3-4}\cmidrule(lr){5-5}\cmidrule(lr){6-6}\cmidrule(lr){7-8}\cmidrule(lr){9-10}\cmidrule(lr){11-12}
\textbf{Target} & \textbf{Ord} & $n$ & $p$ & $n$ & $n$ & $n$ & $p$ & $n$ & $p$ & $n$ & $p$ \\
\midrule
$\PAR_1$ & 1 & 30 & --      & 30 & 30 & 30 & --      & 30 & --       & 30 & -- \\
$\PAR_2$ & 2 & 30 & --      & 30 & 30 & 30 & --      & 30 & --       & 30 & -- \\
$\PAR_3$ & 3 & 13 & $8\!\times\!10^{-6}$ & 29 & 28 & 30 & 1.0     & 30 & 1.0      & 30 & 1.0 \\
$\PAR_4$ & 4 &  2 & 0.0009  & 14 & 13 & 28 & 0.0001  & 30 & $2\!\times\!10^{-6}$ & 28 & 0.0001 \\
$\PAR_5$ & 5 &  0 & 0.005   &  8 &  7 & 24 & $7\!\times\!10^{-5}$ & 25 & $2\!\times\!10^{-5}$ & 21 & 0.002 \\
$\PAR_6$ & 6 &  0 & 0.024   &  6 &  7 & 22 & $7\!\times\!10^{-5}$ & 19 & 0.001    & 26 & $3\!\times\!10^{-7}$ \\
\midrule
$\MUX_6$ & 3 &  0 & 1.0     &  1 &  1 &  0 & 1.0     &  0 & 1.0      &  0 & 1.0 \\
\bottomrule
\end{tabular}
\end{table}

\begin{figure}[ht]\centering
\includegraphics[width=\linewidth]{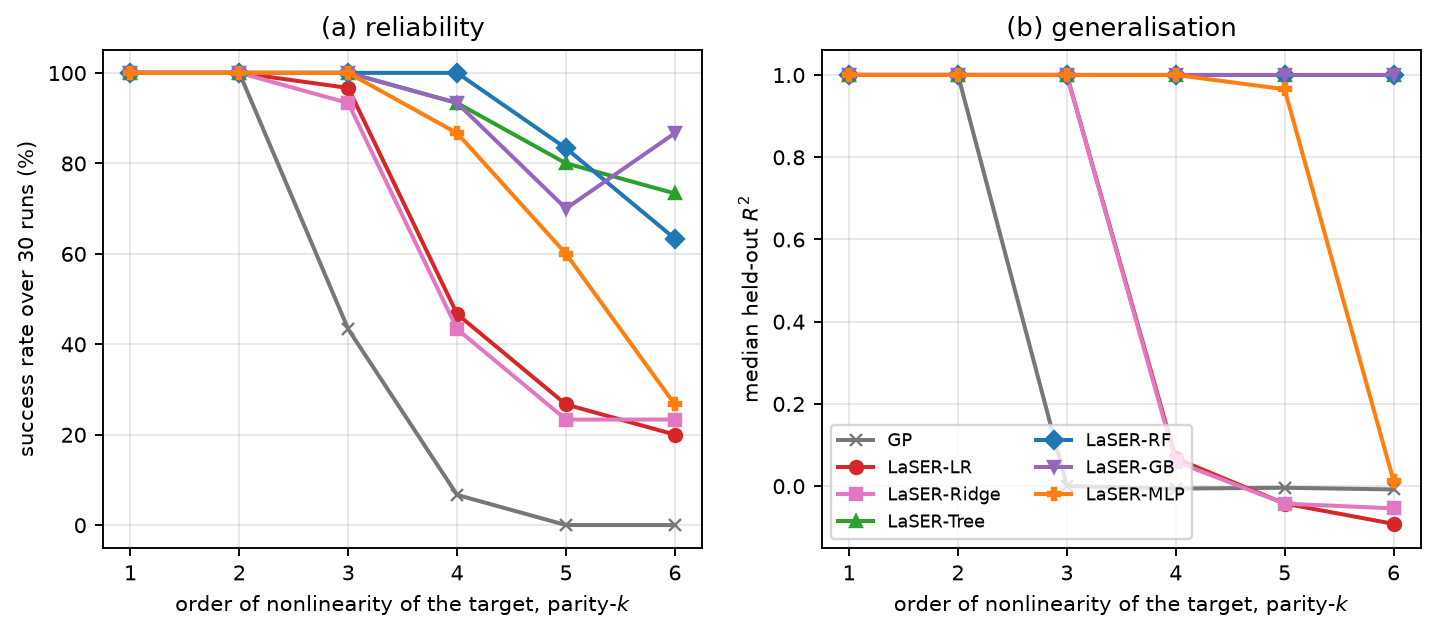}
\caption{Reliability and generalisation against the order of nonlinearity of the
target.}
\label{fig:perf}
\end{figure}

Three regimes are visible in Table~\ref{tab:success}. Up to order two every
condition is exact. From order three canonical GP falls away and LaSER-LR is
significantly better than it at every order from three to six
($p=8\times10^{-6}$, $0.0009$, $0.005$, $0.024$). From order four the nonlinear
learners separate from LaSER-LR, significantly at every order and for every
learner, with $p$ between $3\times10^{-7}$ and $0.002$; at order six the linear
variants are exact in six and seven runs of thirty against twenty-two, nineteen
and twenty-six for tree, forest and boosting. Wilcoxon tests on paired accuracies
agree, giving $p=0.32$ at order three for all three nonlinear learners and
$p\le0.00024$ at every order from four upwards.

Table~\ref{tab:reloc} is the measurement the paper turns on. It reports the order
of nonlinearity of the representation evolution produced, over the \emph{successful}
runs only: an unsuccessful run says nothing about the order that would have been
required, so an unconditional median would understate the constraint at high order
by averaging in representations that never worked.

\begin{table}[ht]\centering
\caption{Median order of nonlinearity of the evolved representation $g$, over
successful runs. A dash means no run succeeded. The linear variants lie on the
diagonal at every order without a single exception; the tree based learners hold
at two or three.}
\label{tab:reloc}
\begin{tabular}{lrrrrrrrr}
\toprule
\textbf{Target} & \textbf{Order} & \textbf{GP} & \textbf{LR} & \textbf{Ridge} & \textbf{Tree} & \textbf{Forest} & \textbf{Boosting} & \textbf{MLP} \\
\midrule
$\PAR_1$ & 1 & 1 & \textbf{1} & \textbf{1} & 1 & 1 & 3 & 1 \\
$\PAR_2$ & 2 & 2 & \textbf{2} & \textbf{2} & 1 & 1 & 2 & 2 \\
$\PAR_3$ & 3 & 0 & \textbf{3} & \textbf{3} & 2 & 2 & 3 & 3 \\
$\PAR_4$ & 4 & 0 & \textbf{4} & \textbf{4} & 2 & 2 & 2 & 4 \\
$\PAR_5$ & 5 & 0 & \textbf{5} & \textbf{5} & 2 & 3 & 3 & 4 \\
$\PAR_6$ & 6 & 0 & \textbf{6} & \textbf{6} & 3 & 3 & 3 & 6 \\
\midrule
$\MUX_6$ & 3 & 0 & 3 & 3 & -- & -- & -- & -- \\
\bottomrule
\end{tabular}
\end{table}

\begin{figure}[ht]\centering
\includegraphics[width=\linewidth]{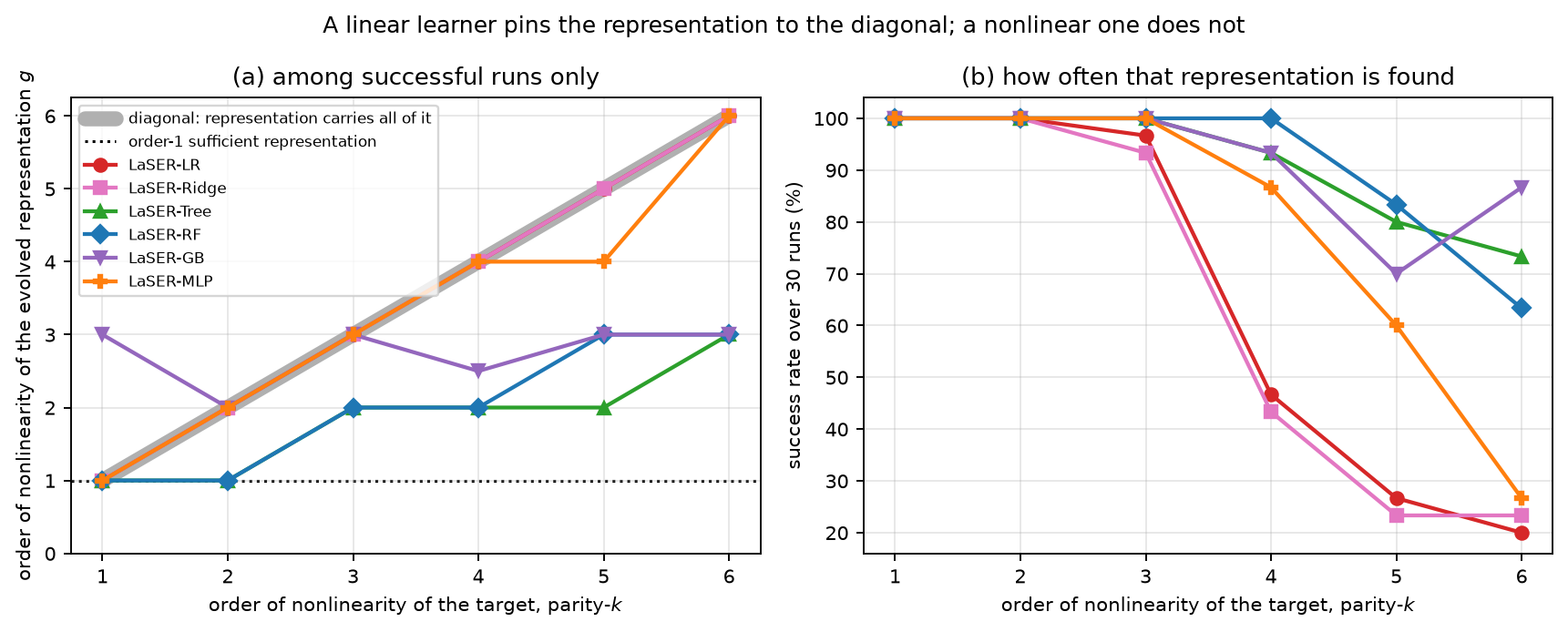}
\caption{(a) Order of nonlinearity of the evolved representation among successful
runs; the grey band is the diagonal. (b) How often that representation is found.}
\label{fig:reloc}
\end{figure}

As the target's order runs from one to six, the order of the representation
evolved by LaSER-LR and LaSER-Ridge runs $1,2,3,4,5,6$, with no exception in any
of the thirty runs per condition. This is Proposition~\ref{prop:affine} realised:
an affine map cannot raise the order of its input, so a run succeeds only if
evolution has supplied the target's order in full. The tree based learners succeed
with order two or three throughout. The difference between the two is the order of
nonlinearity that has moved into the learner, and panel (b) of
Figure~\ref{fig:reloc} is the price of not being able to move it.

Three further readings support the same conclusion. Taken unconditionally rather
than over successful runs, LaSER-LR and LaSER-Tree both give median order $3$ on
$\PAR_6$, yet their mean accuracies are $0.512$ and $0.925$: the representations
are of equal order, so the entire difference is what the learner does with them.
The budget also shows in tree size, median node counts at order six being $28$ for
LaSER-LR and $27$ for LaSER-Ridge against $19$ for the forest, and at order one
$7$ and $29$ against $3$; a linear learner forces evolution to spend both order
and structure. Finally, being nonlinear is necessary but not sufficient. LaSER-MLP
tracks the diagonal at $1,2,3,4,4,6$ and succeeds in eight runs of thirty at order
six, not significantly better than LaSER-LR ($p=0.76$), although it does separate
at orders four and five ($p=0.002$, $0.018$). With one hidden layer, early
stopping and roughly $180$ training rows it lacks the capacity to express the $k$
alternations that Proposition~\ref{prop:parity} requires, so in the one respect
that matters here it behaves like a linear learner. This is a quantitative
prediction of the analysis rather than an incidental observation.

\subsection{Multiplexer control and continuous targets}
\label{sec:bounds}

$\MUX_6$ has order $3$, lower than $\PAR_4$ through $\PAR_6$, and is not
symmetric. No condition solves it: the best is one exact run of thirty, by
LaSER-LR. Fisher's exact test finds no difference between LaSER-LR and any
nonlinear learner ($p=1.0$ in every cell of Table~\ref{tab:success}), and Wilcoxon
tests on accuracy agree ($p=0.58$, $0.97$, $0.63$, $0.11$ for tree, forest,
boosting and MLP). On a squared error metric the same comparison gives marginal
values of $0.03$ and $0.05$ for the tree based learners, so we do not claim exact
equality, only that the large and consistent separation present from order four
upwards is absent here. A target can therefore carry substantial order of
nonlinearity and still reward an expressive learner not at all. What predicts the
reward is how much of the target's order can be relocated, not how much order it
has.

\begin{table}[ht]\centering
\caption{Continuous targets, thirty runs each. $p$ values are Wilcoxon
signed-rank tests against LaSER-LR on test MSE.}
\label{tab:cont}
\begin{tabular}{llrrrl}
\toprule
\textbf{Target} & \textbf{Method} & \textbf{Test MSE} & \textbf{Succ.\ \%} & \textbf{Order of $g$} & \textbf{vs LaSER-LR} \\
\midrule
Nguyen-11 & GP         & 0.00000 & 100 & 2 & \\
(order 2) & LaSER-LR   & 0.00000 & 100 & 2 & \\
          & LaSER-Tree & 0.00001 & 100 & 2 & $p<10^{-8}$, LR better \\
          & LaSER-RF   & 0.00001 & 100 & 6 & $p<10^{-8}$, LR better \\
          & LaSER-GB   & 0.00003 & 100 & 3 & $p<10^{-8}$, LR better \\
          & LaSER-MLP  & 0.00003 & 100 & 2 & $p<10^{-8}$, LR better \\
\midrule
Nguyen-8  & GP         & 0.00062 &  67 & 3 & \\
(order 8) & LaSER-LR   & 0.00003 & 100 & 5 & \\
          & LaSER-Tree & 0.00000 & 100 & 4 & $p<10^{-8}$, Tree better \\
          & LaSER-RF   & 0.00000 & 100 & 6 & $p<10^{-8}$, RF better \\
          & LaSER-GB   & 0.00001 & 100 & 4 & $p=1\!\times\!10^{-6}$, GB better \\
          & LaSER-MLP  & 0.00002 & 100 & 3 & $p=0.096$ \\
\bottomrule
\end{tabular}
\end{table}

Table~\ref{tab:cont} shows the predicted ordering outside the instrument: on the
low order target the linear learner is significantly better than every nonlinear
one, and on the high order target every tree based learner is significantly better
than the linear one. The same table shows why the Boolean instrument was needed.
With a tolerance, Proposition~\ref{prop:affine} binds only softly: LaSER-LR
reaches order $5$ on a target of order $8$ and still attains $R^2\ge0.99$ in all
thirty runs, because a smooth target can be approximated below its exact order.
Every condition succeeds here and the effect survives only in the magnitude of the
error. At $\varepsilon=0$ on the cube the constraint binds exactly, which is why
the separation in Section~\ref{sec:reloc} is visible at all.

\subsection{Innate and learned behaviour across generations}
\label{sec:overgen}

The learner is refitted every generation and never inherited, so this is lifetime
learning, and the innate behaviour recorded at every generation lets us ask
whether evolution comes to do the job alone. Two hypotheses are separable from the
trajectories: under assimilation the innate behaviour rises and the gap to the
learned behaviour closes; under what we will call evolving learnability the
learned behaviour rises while the innate behaviour stays flat or falls, so the gap
widens.

\begin{figure}[ht]\centering
\includegraphics[width=\linewidth]{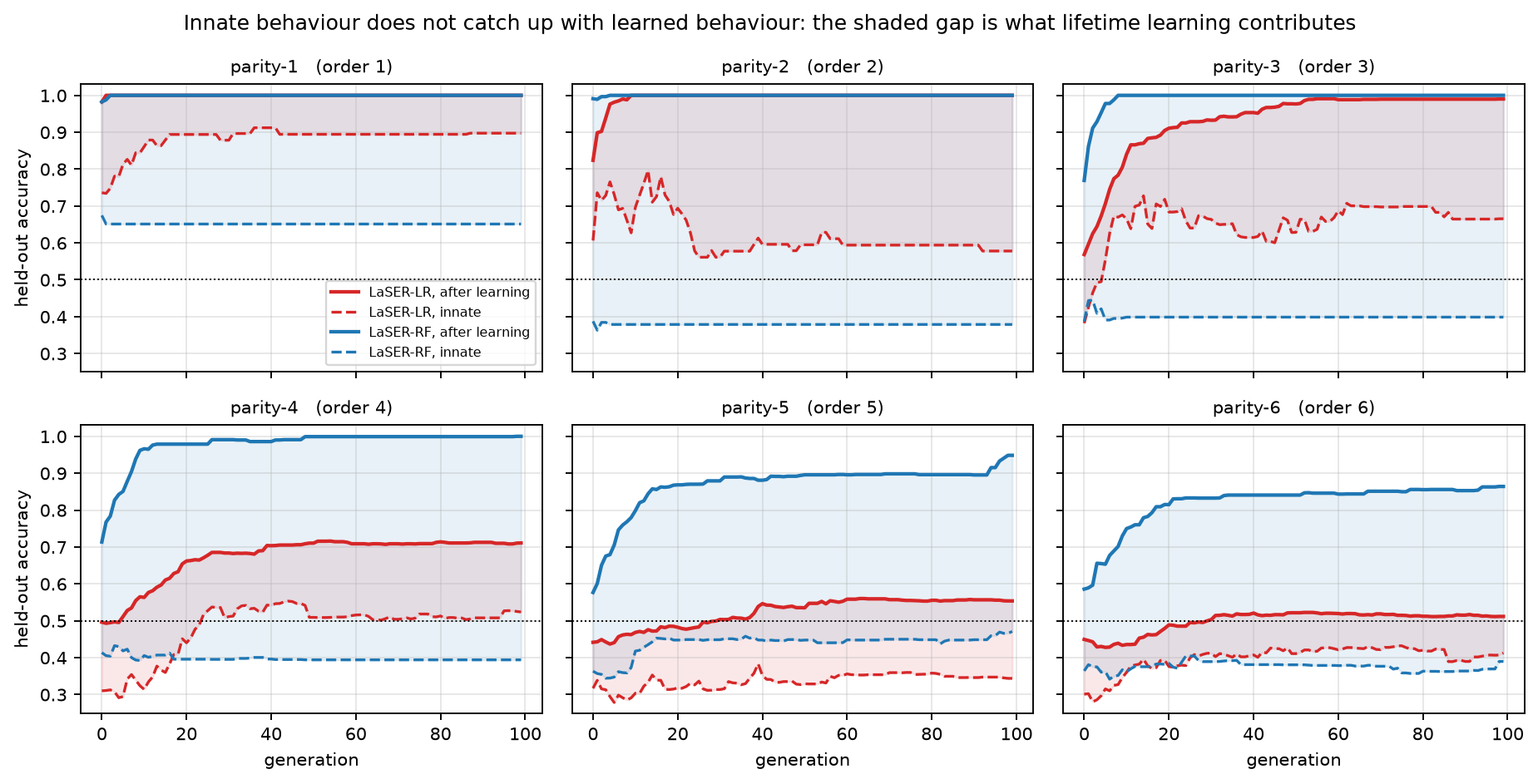}
\caption{Innate (dashed) and learned (solid) accuracy of the best individual, mean
over thirty runs; the shaded area is what lifetime learning contributes.}
\label{fig:baldwin}
\end{figure}

\begin{figure}[ht]\centering
\includegraphics[width=\linewidth]{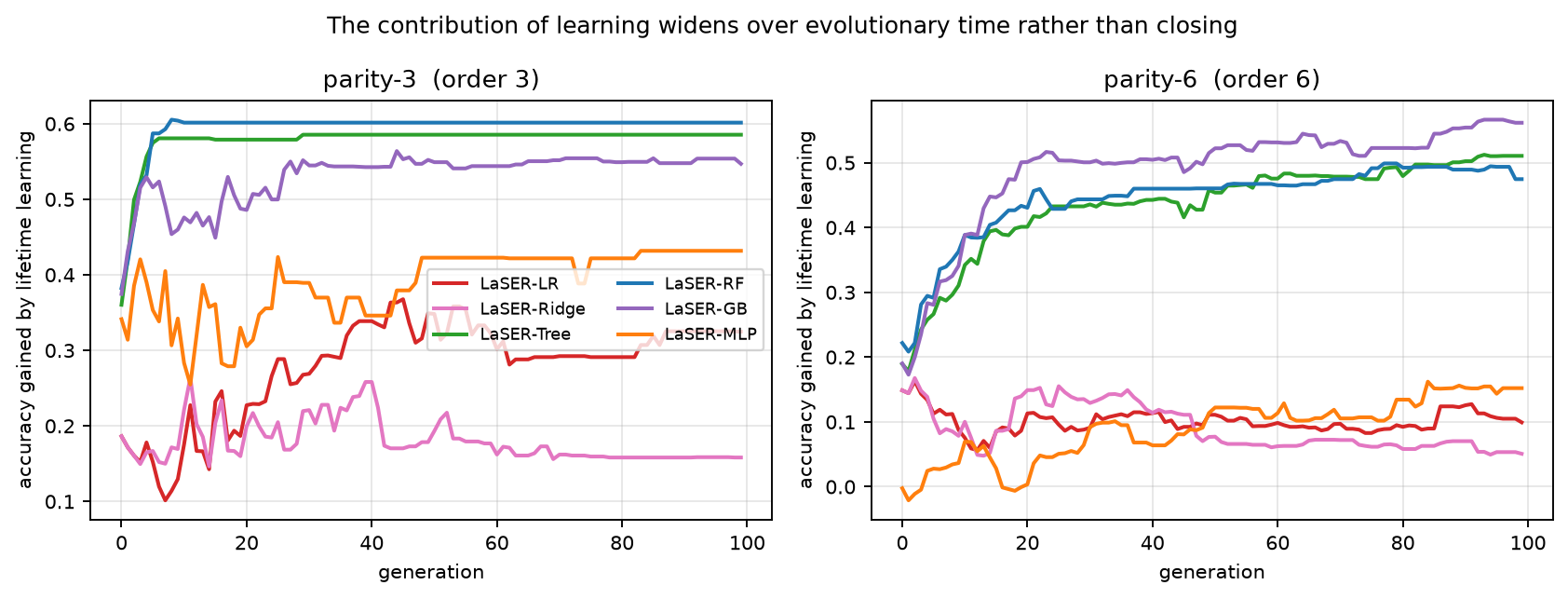}
\caption{Accuracy contributed by lifetime learning, over generations.}
\label{fig:gap}
\end{figure}

Across the sixty-three problem and learner combinations of both domains,
assimilation occurs in four, three of which are canonical GP, for which success
and innateness are by definition the same event. In forty-four the gap widens
instead. On $\PAR_4$ under a random forest the innate accuracy of the best
individual \emph{falls} from $0.713$ at the first generation to $0.394$ at the
last while its learned accuracy rises from $0.713$ to $1.000$: evolution moves
away from representations that predict the target on their own, because the
learner has made that property unnecessary.

\begin{table}[ht]\centering
\caption{Percentage of runs reaching perfect innateness, in which the evolved
representation alone attains the target and the learner is redundant.}
\label{tab:innate}
\begin{tabular}{lrrrrrrr}
\toprule
\textbf{Target} & \textbf{GP} & \textbf{LR} & \textbf{Ridge} & \textbf{Tree} & \textbf{Forest} & \textbf{Boosting} & \textbf{MLP} \\
\midrule
$\PAR_1$ & 100.0 & 3.3 & 0.0 & 26.7 & 26.7 & 0.0 & 0.0 \\
$\PAR_2$ & 100.0 & 3.3 & 0.0 & 13.3 & 13.3 & 0.0 & 3.3 \\
$\PAR_3$ &  43.3 & 6.7 & 0.0 & 10.0 & 13.3 & 0.0 & 3.3 \\
$\PAR_4$ &   6.7 & 10.0 & 0.0 & 3.3 & 3.3 & 0.0 & 6.7 \\
$\PAR_5$ &   0.0 & 0.0 & 0.0 & 0.0 & 0.0 & 0.0 & 0.0 \\
$\PAR_6$ &   0.0 & 3.3 & 0.0 & 0.0 & 0.0 & 0.0 & 3.3 \\
\midrule
Nguyen-11 & 100.0 & 10.0 & 0.0 & 53.3 & 10.0 & 10.0 & 0.0 \\
Nguyen-8  &   0.0 & 0.0 & 0.0 & 0.0 & 0.0 & 0.0 & 0.0 \\
\bottomrule
\end{tabular}
\end{table}

\begin{figure}[ht]\centering
\includegraphics[width=\linewidth]{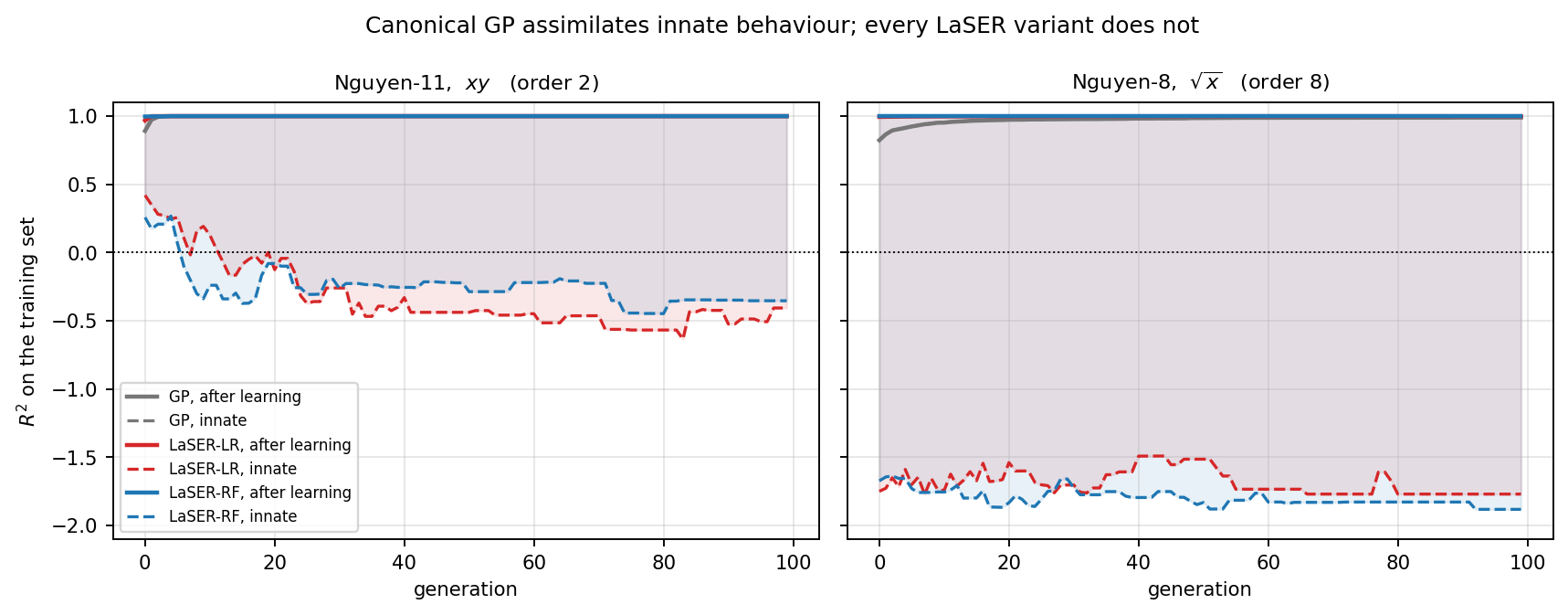}
\caption{Continuous targets: canonical GP assimilates, every learner condition
does not. $R^2$ floored at $-2$ for legibility.}
\label{fig:baldreg}
\end{figure}

Table~\ref{tab:innate} gives the endpoint. Perfect innateness occurs only where
the target's order is lowest and never at all above order four, and
Figure~\ref{fig:baldreg} shows the same on the continuous targets, where canonical
GP assimilates, the $R^2$ of its innate representation rising from $0.892$ to
$1.000$ on Nguyen-11, while every learner condition moves away from innate
competence. One result cuts against the simplest reading: tree and forest
assimilate more often than LaSER-LR at the lowest order, $26.7\%$ against $3.3\%$
on $\PAR_1$, which capacity alone does not predict. The explanation is that
shielding is governed by the size of the set of representations the learner maps
onto the target, not by capacity. For an affine learner that set is a two
parameter family, so every $g$ affinely related to the target already attains
optimal fitness and there is no pressure to land on the target exactly. Capacity
and preimage size coincide on continuous regression problems, which is why the
distinction has not been visible before.

A measurement caveat applies to the continuous half. On Boolean targets the innate
measure is sign based and so invariant to rescaling of $g$. On continuous targets
it is not: $R^2$ of $g$ against the target penalises scale differences that an
affine learner removes for free, so the continuous numbers measure \emph{strict}
innateness, meaning $g$ equalling the target, not the structure $g$ captures. The
Boolean measurements are the primary evidence here for that reason.

\section{Discussion}

The usual account of a genetic programming system paired with a downstream learner
is additive: evolution gets part of the way and the learner absorbs the residual
error. The measurements here do not support that account. The order of
nonlinearity that evolution must supply changes from six to three on the same
target, with the size limit, primitive set, and every search parameter held
fixed and only the learner changed. The total structural complexity of the
problem is not reduced by the learner; it is paid for somewhere else. What a
learner buys is a transfer of budget, and the measurement of
Section~\ref{sec:reloc} makes that transfer visible directly on the evolved
objects, instead of inferring it from error.

That gives a criterion where previously there was a benchmark result. It is not
that expressive learners suit hard problems. $\MUX_6$ is of lower order than the
parity targets on which the separation is largest, and on it no learner helps at
all; Nguyen-11 is of low order and there a linear learner is significantly better
than every alternative. What decides the choice is whether the target admits a low
order representation from which a bounded learner can recover it, which is the
quantity $\Delta$ of Definition~\ref{def:delta}. Difficulty and the appropriate
downstream model are orthogonal, and an experimenter reading difficulty off a
benchmark's reputation will choose wrongly in both directions.

The second finding is uncomfortable for the paradigm itself, not for any
particular system in it. Genetic programming is often chosen because its models
are symbolic and stand on their own. Attaching a learner to improve accuracy
systematically removes the selection pressure that would make them do so, and the
effect is not incidental: on $\PAR_4$ under a random forest, evolution actively
moves away from representations that predicted the target unaided. The more
capable the learner, the smaller the share of the model that resides in the
readable part. This tension is invisible when the downstream stage is fixed to be
affine, because then there is nothing for the learner to absorb, and it applies
equally to multidimensional systems that pair evolved features with a
deterministic classifier or regressor~\cite{munoz2015m3gp,lacava2019feat}.

Two results previously reported for the single-feature form of this
arrangement~\cite{le2026laser} are recovered here and read differently. The
pattern already noted, a linear learner dominating one benchmark suite while
nonlinear learners take over on a harder one, is reproduced here, but the two
suites differ in $\Delta$, not in difficulty; structure was never the binding
constraint in the standard suites. That the evolved representation
sometimes attains the target unaided is also reproduced, and Table~\ref{tab:innate}
locates those cases precisely: they are the low order corner of the space, they
never occur above order four, and the prevailing direction across the full range
is the opposite one.

\paragraph{Limitations.} The parity family raises the target's order and the
relocatable part of it together, since parity is symmetric at every $k$ and hence
$\Delta=k-1$. The dissociation of the two therefore rests on $\MUX_6$ and on the
two continuous targets, which is thinner evidence than the pinning result it
accompanies. The program supplies a single feature throughout; multidimensional
systems supply several, and the generalisation of Definition~\ref{def:delta} to
$d$ features is open. The Boolean experiments use
full truth tables on eight inputs, which keeps the Fourier analysis exact but
limits scale, since larger $n$ would require sampling and reintroduce estimation
into the quantity the study is built on. We do not know how to compute $\Delta$
for an arbitrary target and rely on families where it is known analytically. And
as noted in Section~\ref{sec:overgen}, the innate measure on continuous targets is
not invariant to rescaling.

\paragraph{Future work.} The experiment that would most strengthen the central
claim separates order from relocatable order directly, using matched pairs of
equal order and opposite decomposability: $\PAR_4$ against a random $4$-junta on
the same inputs, and in the continuous case $(w^\top\bx)^6$ against an additive
degree-$6$ polynomial. Proposition~\ref{prop:symmetric} offers a family of targets
well beyond parity, including majority and threshold functions of full degree, on
which the prediction can be tested without relying on parity at all. The order of
the evolved representation, used here as a diagnostic, is also a candidate
selective pressure, giving a Boolean counterpart to the Pareto scheme
of~\cite{vladislavleva2008order}. Finally, the extension of
Definition~\ref{def:delta} to $d$ features carries a testable prediction: the
relocatable order should grow with $d$, so multidimensional feature construction
systems should require less expressive downstream models than one-dimensional ones
at equal accuracy.

\bibliographystyle{unsrt}
\bibliography{references}

\end{document}